\newtheorem{lemma}{Lemma}
\newtheorem{theorem}{Theorem}
\title{Counting Hours, Counting Losses: The Toll of Unpredictable Work Schedules on Financial Security}
\author{Pegah Nokhiz\\
  Cornell University, Cornell Tech\\
  \texttt{pegah.nokhiz@gmail.com} \\
  \And
   Aravinda Kanchana Ruwanpathirana \\
  National University of Singapore\\
\texttt{kanchana.ruwanpathirana@gmail.com}\\
  \And
  Aditya Bhaskara \\
 University of Utah\\
  \texttt{bhaskaraaditya@gmail.com} \\
  \And
  Suresh Venkatasubramanian \\
  Brown University\\
  \texttt{suresh@brown.edu} \\}
\begin{document}

% \date{}
% \date{} % clear date

% \date{\vspace{-5ex}}

% \nipsfinalcopy is no longer used
\date{}
\maketitle
% \vspace{0.5cm}
\begin{abstract}
Financial instability has become a significant issue in today’s society. While research typically focuses on financial aspects, there is a tendency to overlook time-related aspects of unstable work schedules. The inability to rely on consistent work schedules leads to burnout, work-family conflicts, and financial shocks that directly impact workers' income and assets. Unforeseen fluctuations in earnings pose challenges in financial planning, affecting decisions on savings and spending and ultimately undermining individuals' long-term financial stability and well-being.

This issue is particularly evident in sectors where workers experience frequently changing schedules without sufficient notice, including those in the food service and retail sectors, part-time and hourly workers, and individuals with lower incomes. These groups are already more financially vulnerable, and the unpredictable nature of their schedules exacerbates their financial fragility.

Our objective is to understand how unforeseen fluctuations in earnings exacerbate financial fragility by investigating the extent to which individuals' financial management depends on their ability to anticipate and plan for the future. To address this question, we develop a simulation framework that models how individuals optimize utility amidst financial uncertainty and the imperative to avoid financial ruin. We employ online learning techniques, specifically adapting workers' consumption policies based on evolving information about their work schedules.

With this framework, we show both theoretically and empirically how a worker's capacity to anticipate schedule changes enhances their long-term utility. Conversely, the inability to predict future events can worsen workers' instability. Moreover, our framework enables us to explore interventions to mitigate the problem of schedule uncertainty and evaluate their effectiveness.
\end{abstract}

\section{Introduction}
\label{sec:intro}
Financial insecurity has become a defining feature of modern life in America \cite{weller2018working}. A convergence of socioeconomic factors, escalating inequality, and precarious employment situations \cite{kalleberg2009precarious, kalleberg2011good} have rendered families more susceptible to financial shocks with irreversible long-term consequences.

Unstable and unpredictable work schedules which are widespread in many sectors, e.g., the food-service and retail sectors, play a pivotal role in this insecurity. In addition to low wages and limited benefits, economically disadvantaged workers often contend with unpredictable work schedules. Previous research reveals that 80\% of food industry workers have minimal to no influence on their schedules, and 69\% are obligated by the system to maintain schedules that are ``open and available'' for work at any given time \cite{loggins2020here, schneider2019s}. 8.4\% of workers and consultants/contractors aged 18-65 reported significant fluctuations in their income on a monthly basis. 51\% attributed their income instability to an irregular work schedule \cite{mccrate2018unstable}.

With a just-in-time work schedule, managing one's finances becomes increasingly difficult. Workers earn income through their employment and use it to fulfill daily necessities like food, shelter, transportation, clothes, recreation, and so on. In this context, the income earned by workers is allocated to different purposes, leading to corresponding gains in utility based on how much they save and consume optimally. Given that work schedule instability has a direct impact on individuals' employment and income, an important line of inquiry arises: What are the adverse effects of this scheduling discrepancy on these protected groups? Additionally, how can these potential adverse impacts be effectively measured?

To answer these questions, we need to study the resultant consumption and saving behavior of workers in the face of unforeseen financial shocks from unstable shifts. Thus the primary goal of this paper is to thoroughly investigate the dynamics and effects of work instability on the earned utility of workers. To do this, we need an agent-based behavioral model capable of simultaneously simulating several interconnected phenomena, i.e., the model should encompass how individual consumption responds to unexpected financial shocks and the behavior of individuals as they seek to maximize utility with different levels of information on future events.

However, most economic models of consumption and savings assume that individuals possess the ability to fully look into the future when making consumption decisions \cite{deaton1992understanding, deaton1989saving}. On the contrary, in real life, information is limited, and our ability to act on it is also limited. Thus, we need to understand how individuals' financial decision-making changes if they only have a limited window of information into how their financial well-being might change.  This is particularly relevant in settings where workers are increasingly only given limited visibility into income/asset-affecting decisions like how many hours they will work or at what rate they will earn.

In this paper \cite{nokhiz2024modeling}, we seek to model this problem using the language of online learning, i.e., an adaptive update of the workers' consumption policies (an online learning paradigm) where the policy is recalculated at each step as more information on work schedules becomes available. We investigate how far utility maximizing strategies depend on the degree of ``foresight''.

The modeling approach is advantageous for several reasons. First, offline models generate a singular, fixed policy incapable of adapting to new financial information/shocks. Secondly, the simultaneous occurrence of financial data availability and optimization better aligns with real-world behavior. Thirdly, the introduction of future lookahead allows for a nuanced exploration of its utility under different conditions and parameters (i.e., different levels of foresight). Lastly, the online framework with future information provides a robust simulation environment for studying the consequences of work schedule instability and bias in future information availability. 

% In this paper, our focus lies on examining work instability as a prevailing reality within contemporary work environments. Our inquiry is centered on exploring the ramifications of this instability without delving into specific types of labor scheduling, such as seasonal scheduling, remote work scheduling, or rotating shift scheduling. It is important to note that while our study is centered on a workspace setting, the frameworks and outcomes derived are transferable to any environment characterized by unstable financial management.

It also opens avenues for empirically exploring mitigation strategies. This is in line with the current ongoing efforts in adopting various policies and regulations to make altering work schedules more equitable, e.g., the San Francisco Board of Supervisors' mandates (based on the Retail Workers Bill of Rights) that ensure more advance notice for hourly workers in retail chain stores when establishing or altering work schedules \cite{golden2015irregular}.

% Lack of labor organizing and regulations contribute to the issues mentioned with labor scheduling \cite{loggins2020here}. The San Francisco Board of Supervisors has implemented enhanced protections for hourly workers in retail chain stores by incorporating recommendations from the Retail Workers Bill of Rights. The new measures mandate employers to offer more advance notice when establishing or altering work schedules, aiming to enhance predictability for workers \cite{golden2015irregular}.  Given these urban regulations, how can these experiences help with the Fair Labor Standards Act (FLSA) at the national level regarding labor scheduling? To answer this, we encourage researchers, advocates, and policy-makers to refer to our contributions as a first step to explore the effects of this disparity, the concrete effects of foresight, and the extent of mitigation policies' effectiveness.

%\todo{how to include the laws on fair labor act here? is the above para okay?}

\paragraph{Contributions.} Overall, the main contributions of this work are:

\begin{itemize}
%     \item We design an adaptive version of consumption models where the individual can change their consumption decision, adapt, and react to financial shocks as they arrive  (online modeling). I.e., the financial information arrival and optimization happen simultaneously.

%  \item We give the individuals (employees) future information on certain future events. This way we can see how far future information helps with financial management and gaining more long-term utility.

%  % \item  We propose a new online algorithm that handles different levels of look ahead.

%  \item We add realistic constraints such as minimum consumption bounds for certain necessities (e.g., the need to pay for basic food and shelter) to see if incorporating these makes the extra gained utility with lookahead even more prominent. 

%  \item We show the importance of this kind of perspective in (temporal) fairness in workplaces' advance notice policies. We refer to the fair workplace laws and acts and show why enforcing such laws is important as we see a divergence between two people with different lookaheads.

% \item We investigate mitigation strategies to explore how we can mitigate the adverse effects of just-in-time work schedules in terms of workers' utility.

\item \textbf{Online Algorithm with Lookahead:} A novel online algorithm is proposed, capable of handling varying levels of lookahead. The algorithm allows individuals to utilize the deterministic information from the lookahead to modify their consumption decisions in real-time, responding to and adapting to financial shocks as they occur. This algorithm becomes a valuable tool for studying the impact of different degrees of foresight on decision-making.

\item \textbf{Formal Analysis of the Effects of Lookahead:} Workers who possess a lookahead benefit from an advantage that increases proportionally with the magnitude of their lookahead, as opposed to workers lacking any lookahead. Furthermore, it's important to note that this gap is tight under appropriate assumptions.

\item \textbf{Empirical Analysis of Future Information Provision:} Individuals (workers) are provided with future information regarding certain upcoming events. This aspect enables an empirical exploration of the extent to which future information aids in financial management, contributing to increased long-term utility.

%\item \textbf{Empirical Analysis of Future Information Provision:} Individuals (employees) receive advance information about upcoming events, allowing for an empirical investigation into the correlation between future information, financial management, and long-term utility. The findings suggest that having more lookahead positively influences long-term utility. Additionally, it is noted that employees can achieve comparable levels of utility without requiring complete information about the future.
% \item \textbf{Realistic Constraints:} Realistic constraints, such as establishing minimum consumption bounds for essential necessities like basic food and shelter, are incorporated empirically. This addition aims to assess whether considering such constraints amplifies the utility gained through lookahead in our experiments.

\item \textbf{Temporal Equity in Workplace Schedules:} We explore temporal equity, particularly in the context of the implications of lack of advance notice (future lookahead) on work timetables that affect the disadvantaged subpopulations more acutely. 

%\item \textbf{Temporal Fairness in Workplace Policies:} The study explores temporal fairness, with a specific emphasis on workplace advance notice policies. The findings indicate that individuals in similar financial conditions exhibit substantially better performance when afforded greater advance notice.
% Reference is made to fair workplace laws and acts, illustrating the importance of enforcing such regulations by highlighting disparities between individuals with varying levels of lookahead.

%\item \textbf{Mitigation Strategies:} The study explores two mitigation strategies within the framework of fair workplace laws and acts to understand how adverse effects stemming from just-in-time work schedules, particularly impacting workers' utility, can be mitigated. It is observed that providing more advance notice/future lookahead as a mitigation strategy has a considerably stronger impact compared to relying on monetary compensations.

\item \textbf{Mitigation Strategies:} Various mitigation strategies with reference to fair workplace laws and acts are explored to understand how the adverse effects of just-in-time work schedules, particularly in terms of workers' utility, can be alleviated. This investigation aims to identify effective strategies for improving the overall well-being of workers facing schedule instability.
  
\end{itemize}

The discussion of related work is presented in \S\ref{sec:related-work}. In \S\ref{sec:simulation-background}, we provide an overview of the required modeling background and introduce the online utility maximization framework. Subsequently, in \S \ref{sec:theory}, we present a formal argument underscoring the significance of future lookahead. Moving forward, we showcase the application of the framework to address questions related to lookahead (\S\ref{sec:expts-look} and \S\ref{sec:expt-return}) and interventions (\S\ref{sec:expts-interventions}). Finally, we conclude with a discussion of limitations and potential future research directions in \S\ref{sec:future-work}.

\section{Related Work}
\label{sec:related-work}
\paragraph{Work Schedule Instability.} 

% Algotritmic planning - RL on lookahead (Say that RL does episodic learning where they do exploration and then learning) - model free lookahead - online learning and investment 

The current focus of research primarily lies in the field of sociology, specifically examining irregular work scheduling and its various repercussions. Unstable schedules cause income volatility \cite{hannagan2015income, morduch2017and, farrell2016paychecks, reserve2016report, schneider2017income} and income volatility results in financial and life hardship \cite{bania2006income, reserve2016report, leete2010effect, mccarthy2018poverty,Lambert218}. This encompasses issues such as burnout from precarious work schedules \cite{schneider2019consequences, Hawkinson2023-jg} and work-family conflicts \cite{golden2015irregular, julia2014}, particularly affecting parents with unpredictable or just-in-time schedules. The impact also extends to areas like anxiety and child behavioral problems linked to parental work instability \cite{schneider2019s}. Additionally, the field of Human-Computer Interaction (HCI) has also strived to study similar repercussions with a participatory outlook \cite{Uhde2020,Wood2021,Lee2021}. They investigate the necessity of worker participation in deciding schedules to ensure fairness \cite{Uhde2020} and how to use elicitation methods to figure out worker preference models to assist in schedule management \cite{Lee2021}.

Further, statistical data shows a pronounced unfairness in advance notices for altering work timetables for underprivileged groups. Hourly workers, individuals with lower educational attainment, women of color, and specific service sectors are disproportionately affected \cite{schneider2019s, mccrate2018unstable} with managerial discretion \cite{wood2018powerful, lambert2008passing}. These work schedules make it difficult to plan for the future \cite{shah2015scarcity} and difficulty in planning would result in disproportionate financial poverty and hardship \cite{gennetian2015persistence}.

Along the same lines, there are some reports pointing to scheduling software and planning algorithms as a factor behind more unpredictable scheduling, particularly for low-wage workers in the service industry \cite{kantor2014working,cons_res,Kathleen2019,Zhang2022}. For example, a New York Times article pointed out how some employees with algorithmic schedules rarely learned their timetables more than three days before the start point of a workweek \cite{kantor2014working} or how pay reduction is correlated with sudden schedule changes and sales figures \cite{loggins2020here}. 

\paragraph{Simulation.} 
%A majority of research on the consequences of automated decision-making considers one-shot decisions, or when looking at long-term effects, views populations in aggregate and takes the perspective of the decision-maker \cite{fairness-static}. 
Simulation has been used in many social settings, including fairness in lending \cite{liu2018delayed}, resource allocation \cite{ensign2018runaway, elzayn2019fair}, college admissions \cite{hu2019disparate, kannan2019downstream}, financial analysis \cite{cristelli2011critical}, technology adoption \cite{klugl2023modelling}, studying supply chain shortages \cite{yongsatianchot2023agent}, and simulations of global crisis like the pandemic to minimize the spread of virus \cite{abe2022using}.
To study the effects of advance notice on individuals, we follow in the path of research by \cite{fairness-static}, \cite{zheng2020ai}, and \cite{nokhiz2021precarity, nokhiz2024agent}, which use simulation as a mechanism to study behaviors of agents in systems.
%To study the long-term effects of decisions from the perspective of individuals, we follow in the path of research by \citet{fairness-static}, \citet{zheng2020ai}, and \citet{nokhiz2021precarity}, which use simulation as a mechanism to study long-term behaviors of agents in systems. 

% This makes a study on the implications of this instability even more important in the digital age.

\paragraph{Consumption Models.} % Here, the individual decides whether to consume
% or save by an assortment of  choices about their work, savings education, etc.\
% at various time periods (intertemporal choices)

% Consumption models fall under the umbrella of discounted utility (DU) models on how much to save and consume \cite{deaton1992understanding, deaton1989saving}. They ask the individual to perform a discounted consumption utility maximization with
% instant rewards being preferred to future rewards of the same size. 
% \cite{chabris2010intertemporal}. The individual does this via choices of when to consume or save \cite{samuelson1937note}
% The intertemporal consumption models used widely are the permanent income hypothesis, the life-cycle model \cite{deaton1992understanding, friedman2018theory, ch16}, and the neoclassical consumption model \cite{butler2001neoclassical}

Consumption models fall within the broader category of discounted utility (DU) models that collectively contribute to our understanding of how individuals navigate decisions related to consumption and saving over time \cite{deaton1992understanding, deaton1989saving}. These models involve individuals engaging in discounted consumption utility maximization, where they generally prefer immediate rewards over future rewards of the same size \cite{chabris2010intertemporal}. The decision-making process involves choices regarding when to consume or save \cite{samuelson1937note}. Several widely used intertemporal consumption models include the permanent income hypothesis (PIH), the life-cycle model \cite{deaton1992understanding, friedman2018theory, ch16}, and the neoclassical consumption model \cite{butler2001neoclassical}.

In the PIH model, individuals look into the future and calculate the amount to consume at a particular time by considering the expected average income over time \cite{friedman1957permanent}. The life-cycle model shares similarities with the PIH but introduces the concept that individuals have a defined time frame (as opposed to the indefinite lifespan in PIH) during which they accumulate assets. The goal of this model is to maximize the gain over assets within the given time frame. The neoclassical model is another approach to understanding intertemporal consumption decisions, drawing on principles from neoclassical economics. The income fluctuation problem (IFP) is also a related consumption model with an infinite time horizon optimization with income uncertainty and an upper bound on consumption (the upper bound hinders any consumption more than the amount of assets individuals own, i.e., one cannot consume more than what they possess) \cite{ma2020income, sargent2014quantitative, deaton1989saving, den2010comparison, kuhn2013recursive, rabault2002borrowing, reiter2009solving, schechtman1977some}.

%\paragraph{Minimum Subsistence.}  Minimum subsistence reflects the idea that individuals must consume a sufficient amount (lower bound on consumption) to meet their basic needs, encompassing essentials such as food, shelter, and clothing \cite{zhang2020fairness, ZIMMERMAN2003233,ALVAREZPELAEZ2005633,Shin2011,shim2014portfolio,ANTONY2019124, dwork2018fairness, miranda2023saving, miranda2020model}. This consideration acknowledges the fundamental requirement for individuals to fulfill their basic necessities as part of the decision-making process regarding consumption and utility maximization.

\paragraph{Online Learning in Investment.} In the field of investment, where high volatility and real-time information availability are prevalent, there is a demand for an online model that can allocate investments among a set of assets and maximize cumulative wealth through sequential optimizations. This application of online learning is commonly known as online portfolio selection \cite{dochow2016online, xi2023online, li2014online, li2018online}. It represents an algorithmic trading strategy in online learning, wherein future prices of risky assets are predicted using historical price information. Subsequently, online learning algorithms optimize the portfolio by employing loss functions tailored to specific financial objectives, ultimately aiming to achieve maximum wealth.

While investment models \cite{Bayratkar2012, grandits2015optimal} have valuable components, including modeling of
uncertainty, they can only imperfectly model consumption. Investment models involve strategic decisions about a collection of financial investments (i.e., portfolio allocation) between risky and riskless investments rather than decisions about consumption. An individual can
control their chances of poverty or financial ruin in these models through the reallocation of
resources, which has no analog in managing day-to-day decisions on how much to save or consume. Investment models also do not enforce constraints on consumption.

\paragraph{Reinforcement Learning and Lookahead.}

The concept of lookahead has garnered significant attention in the recent landscape of reinforcement learning (RL). Generally, in RL models, recent works look into the idea of $H$-step lookahead. In $H$-step lookahead~\cite{sikchi2022learning}, the learner has a learned dynamic model and that is used to calculate the action sequence to a horizon of size $H$ to find the optimal policy that maximizes the cumulative result. There are also approaches where the learner incorporates a greedy real-time dynamic programming algorithm, replacing the greedy step with an $H$-step lookahead policy \cite{efroni2020online}. The works of~\cite{moerland2020think, sikchi2022learning, dalal2021improve, efroni2018beyond} study the properties of $H$-step lookahead where the horizon is of a fixed size. The idea of $H$-step lookahead in RL is similar to the lookahead employed in our models. In RL models explored in these works, there is a reliance on well-defined states and actions, involving the learning of model dynamics on the state-action space, utilizing lookahead. This learned information is then used in sampling the next state. In contrast, our models operate with no explicitly defined states.
%To enhance the effectiveness and interpretability of reinforcement learning (RL), recent research has explored online learning with a multi-step lookahead perspective, often referred to as H-step lookahead. In this approach, a learned dynamic model projects action sequences into the future up to a horizon of H to calculate the cumulative reward \cite{sikchi2022learning}. Another study incorporates a greedy real-time dynamic programming algorithm, replacing the greedy step with an H-step lookahead policy \cite{efroni2020online}. Several similar works have also delved into the properties of multi-step lookahead in RL, with a fixed planning horizon determined in advance, as evidenced by the works of \cite{moerland2020think, sikchi2022learning, dalal2021improve, efroni2018beyond}. However, these approaches typically feature a fixed or pre-declared planning horizon. More recently, these methods have been extended to incorporate an adaptive lookahead selection strategy, allowing for the adaptive selection of the planning horizon based on state-dependent value estimates \cite{rosenberg2023planning}.

\section{Determining Consumption and Savings: An Adaptive Online Algorithm}
\label{sec:simulation-background}

A crucial component of a framework designed to investigate the significance of advance notice (future lookahead) involves creating a model that captures consumption under lookahead while accommodating uncertainty. Specifically, it investigates how individuals, facing financial uncertainty, make decisions regarding the amount to consume and save. Although various models attempt to represent consumption under uncertainty, they all fundamentally rely on the concept of discounted utility, which is the most common model in economics for understanding the interplay between consumption and savings. In a discounted utility model, the agent, at each time step, consumes a certain amount $c$ and receives utility $u(c)$ from some concave function $u$. The objective is to devise a policy to determine a consumption value $c$ in a way that maximizes the total discounted utility. After this brief introduction, we will formally articulate the specific cases we study, assumptions, and models in the following section (\S \ref{sec:frameworks}).

\subsection{Our Models}
\label{sec:frameworks}

\newcommand{\dY}{\mathcal{D}_Y}
\newcommand{\dR}{\mathcal{D}_R}

In this section, we introduce the main models that we study in this work. % consider and the definitions that we utilize for the new algorithm. The models that we consider are based on dynamic programming. 
We consider both deterministic and stochastic models that use a discount factor $\beta$ to compute the utility. We assume that time is discretized into integer steps, and let $T$ be the effective overall job timeframe (in other words, the time horizon for the algorithm). Let $a_{t},c_t, y_t$ be the assets, consumption, and income at time $t$, respectively. We also let $R_t$ denote the gain from assets (independent of the size of the assets), i.e., the appreciation/depreciation rate of assets. Let $u(.)$ (in our context, we employ $u(c) = \sqrt{c}$ which is in the class of isoelastic utility functions that are commonly used in macroeconomics) be the utility. Let $\dY$ be the income distribution and $\dR$ be the returns distribution.

In all the models, the goal of the algorithm is to maximize the total utility. More formally, it is to solve the following optimization problem:
\begin{align*}
\max ~\sum_{t=1}^T & ~\beta^{(t-1)} u(c_t) \\ 
\text{ subject to: }~  a_{t+1} &= R_t(a_t-c_t)+y_{t} \\ 
0 &\leq c_t \leq a_t
\end{align*}

The constraint $0 \leq c_t \leq a_t$ ensures that the worker %consumes enough to cover the basic needs, i.e., minimum subsistence, while only 
consumes from the assets available to them. This constraint ensures the worker could consume without going to ruin. %while covering the basic needs. 
The model equation $a_{t+1} = R_t(a_t-c_t)+y_{t}$ shows how assets evolve over time given the consumption and income.

\paragraph{The offline or deterministic model.} In the offline model, the income and return values $y_t, R_t$ are known a priori to the algorithm (as, of course, is the starting asset value, $a_1$). This is the most common model considered in traditional economic theory, and the solution can be found using dynamic programming. The ``states'' in the dynamic programming simply correspond to the time step of interest and the total assets remaining. 

\paragraph{The stochastic model.} In the stochastic model, the income and return values are stochastic and they come from known distribution. In this case, an algorithm can optimize the modified objective: 
\[ \max \sum_{t=1}^T ~\beta^{(t-1)}~  \mathbb{E} (u(c_t)) \]
%&\text{s.t. } a_{t+1} = R_t(a_t-c_t)+y_{t} \\ 
%&b_t \leq c_t \leq a_t \\
%& y_t \in \dY, R_t \in \dR
%\end{align*}
The parameters and the constraints follow the same definitions as in the deterministic case, with the caveat being that $y_t, R_t$ are updated using the ``realized'' values (not their expectations). Note that this is already an online algorithm: at every $t$, the algorithm computes the consumption value using the expected $y, R$ for the future, but as the $y_t, R_t$ values get updated, the algorithm changes its behavior.   

Finally, to explore the effects of future information and lookahead, we consider a combination of the two models. 

\subsection{Online Consumption Algorithm with Lookahead}

We finally consider a model that has ``limited determinism'' controlled by the extent of lookahead, and the rest of the process is stochastic. Formally, at any time $t$, the algorithm knows the exact values of $y_{t+i}$ and $R_{t+i}$, for $i = 1, 2, \dots, \tau$, for some lookahead parameter $\tau$.\footnote{It is also interesting to consider further hybrid models; e.g., the algorithm has a lookahead over $y_t$, but not over $R_t$. This is also realistic in practice since return rates are governed by the market while income lookahead can be controlled by the employer. We do not consider such models in this work.} Additionally, it knows the distribution of the parameters for later time steps. Once again, in contrast to offline economic models, our online adaptive model finds an adaptive consumption policy, i.e., a policy that changes over time given the lookahead as more information arrives. %This model can therefore adjust to the fluctuations in one's financial status easily compared to a static offline model.

The algorithm itself is a straightforward combination of the deterministic and stochastic cases. At each step, the algorithm computes:
\[ \max \sum_{t=0}^\tau ~\beta^t u(c_t) + \mathbb{E}\left(\sum_{t=\tau+1}^T \beta^t u(c_t)\right) \]
where $\mathbb{E}$ is the expectation over the stochasticity in income and returns beyond the lookahead horizon.

Note that the algorithm incorporates new data (as well as lookahead information) as it arrives, which is why it is an online algorithm. % algorithm represents a straightforward modification, combining both stochastic and deterministic models. Importantly, policy decisions are made in an online manner, signifying that the policy also evolves over time. Assume the agent has an effective lifetime of $T$ and a lookahead of $\tau$. In
For simplicity, we assume that a lookahead of $\tau$ implies the agent is aware of the exact return and income values for the next $\tau$ time steps, and for the time steps beyond $\tau$, distributions of these parameters are known. % for a future time period of size $\tau$, and any event after that step is known only as a distribution. Our computational model can be defined as,
%\begin{align*}
%&\max \sum_{t=0}^\tau \beta^t u(c_t) + E\left(\sum_{t=\tau+1}^T \beta^t u(c_t)\right) \\
%&x_{t+1} = R_t (x_t-c_t)+y_t \text{ for all } t\\
%&y_t,R_t \text{ exactly known for } t\in\{0,1,\dots,\tau\} \\
%&y_t \in \dY, R_t \in \dR \text{ for } t \in \{\tau+1,\dots,T\}
%\end{align*}

\begin{algorithm}
\caption{Online consumption with lookahead}\label{alg:online-cons} 
\begin{algorithmic}[1]
\State Let $\tau$ be the lookahead, $x_0$ be the initial assets
\State Solve the following optimization problem and save the maximum utility for each $x,t$ in a table $V$ where $V[x,t]$ gives the utility of $x,t$ pair,\label{lne:opt-stoch}
\begin{align*}
&\max \mathbb{E}\left(\sum_{t=0}^T \beta^t u(c_t)\right) \\
&x_{t+1} = R_t (x_t-c_t)+y_t \text{ for all } t\\
&y_t \in \dY, R_t \in \dR \text{ for } t \in \{0,\dots,T\}
\end{align*} 
\For{$r=0$ to $T$}
\State Solve the following optimization problem to get $c_r$ given $x_{r}$ using dynamic programming,\label{lne:opt-det}
\begin{align*}
&\max \sum_{t=r}^{r+\tau} \beta^t u(c_t) + V(x_{\tau+1+r},\tau+1+r) \\
&x_{t+1} = R_t (x_t-c_t)+y_t \text{ for all } t\\
&y_t,R_t \text{ exactly known for all } t \in \{r,r+1,\dots,r+\tau\}\\
\end{align*}
\State Set $x_{r+1} = R_r(x_r-c_r) + y_r$
\EndFor
\State \Return The sequence of $c_t$s at each time $t$
\end{algorithmic}
\label{algo:online}
\end{algorithm}

For this model, we define Algorithm \ref{alg:online-cons}. This algorithm solves a stochastic dynamic programming (DP) at the start (in Line~\ref{lne:opt-stoch}) where it populates the table $V$ (of size $* \times T$ where $*$ is a placeholder for the size of the assets, with dynamic resizing as the bounds for the assets increase) with the maximum utility values for each asset and time point pair in the possible paths of the stochastic system. With this in hand, at time point $t$, the goal is to run a deterministic DP in Line~\ref{lne:opt-det} using the precalculated table $V$ and use that information to calculate the optimal consumption choice at that time point $t$. This gives us an online algorithm that yields a new consumption at each time point $t$, based on the set of historical choices as well as the lookahead information available.

\section{Theoretical Analysis}
\label{sec:theory}

\newcommand{\stc}{\ensuremath{{\hat{c}}}}
\newcommand{\wtc}{\ensuremath{{\widehat{c}}}}
\newcommand{\ttc}{\ensuremath{{\widetilde{c}}}}

In this section, we establish a few theoretical results on the consumption behavior and the ``power of lookahead''. We analyze the advantage that a worker privileged with lookahead, could attain compared to an underprivileged worker without lookahead. Let $y_t, x_t, c_t$ be the income, assets and consumption at time $t$. Let $T$ be the timeframe of the job and $u(c) = \sqrt{c}$ be the utility gained from consuming $c$. 

Our first result shows that even in the very simple setting of $\beta=1$ (no discount factor), and income $y_t$ in the range $[0,Y]$, the difference between the total utility of algorithms with a $k$-step lookahead and without lookahead is $\Omega(k \sqrt{Y})$. This is true not only of the algorithms we study, but {\em any} algorithm. This indicates that there are instances where a worker with lookahead privileges gains an edge over an underprivileged worker without lookahead, and the advantage grows linearly with the level of lookahead.

\begin{theorem}\label{thm:main-lookahead}
Consider two individuals, one with a lookahead of $k$ steps and one with no lookahead. Let $c_1,c_2, \dots, c_T$ be the consumption of the individual with lookahead $k$ and $z_1,z_2,\dots,  z_T$ be the consumption of the individual with no lookahead. Then, there exist instances where all incomes are in the range $[0,Y]$, such that
\[\sum_{t=1}^T \sqrt{c_i}-\sum_{t=1}^T \sqrt{z_i} \ge \Omega(k \sqrt{Y})\]
\end{theorem}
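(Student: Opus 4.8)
The plan is to construct an explicit adversarial instance on which lookahead provides a provable advantage. The key idea is to exploit the concavity of $u(c)=\sqrt{c}$: a worker who knows a large income spike is coming can ``pre-spread'' consumption smoothly across the preceding $k$ steps, whereas a worker with no lookahead is caught off guard, forced to consume a tiny amount for many steps and then a huge amount all at once, incurring a concavity penalty. First I would set $\beta = 1$ and design a schedule in which the income is $0$ for a long stretch and then equals $Y$ at a single step (or periodically, to amplify the gap by repetition). Starting from a modest initial asset, the no-lookahead worker sees only zeros ahead and, since consuming now gives the same marginal utility as saving for an unknown future, will (optimally, given its information) keep assets low; when the windfall $Y$ suddenly arrives, it must consume roughly $\Theta(Y)$ in $O(1)$ steps, yielding utility $O(\sqrt{Y})$ from that windfall. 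The lookahead worker, by contrast, knows the windfall is $k$ steps away and can arrange to have consumption of order $Y/k$ in each of those $k$ steps, for total utility $\Theta(k \cdot \sqrt{Y/k}) = \Theta(\sqrt{kY})$. Hmm — that gives a gap of $\Theta(\sqrt{kY})$, not $\Theta(k\sqrt Y)$.

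To actually get the claimed $\Omega(k\sqrt{Y})$ bound, I would instead repeat the windfall pattern $\Theta(T/k)$ times and make the comparison step-by-step over each block of length $k$: in each block the lookahead worker gains $\Theta(\sqrt{kY})$ while the no-lookahead worker gains only $O(\sqrt{Y})$, so summing over $\Theta(T/k)$ blocks the difference is $\Omega((T/k)(\sqrt{kY} - \sqrt Y)) = \Omega((T/\sqrt k)\sqrt Y)$; choosing the relationship between $T$ and $k$ appropriately (e.g. $T = \Theta(k^2)$, or more plainly reading the statement as a per-block claim where one takes $k$ itself as the horizon so $\sqrt{k}\cdot\sqrt{k}=k$) yields the $\Omega(k\sqrt Y)$ form. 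So the cleanest route is: build one block of length $\Theta(k)$ in which incomes are $0$ except for a spike of size $\Theta(kY)$ (so that ``incomes in $[0,Y]$'' is reinterpreted, or rather use a spike that the budget allows), have the lookahead worker smooth to $\Theta(Y)$ per step for total $\Theta(k\sqrt Y)$, and show any no-lookahead worker gets only $O(\sqrt{kY}) = o(k\sqrt Y)$ from that block. I would pin down the exact constants so the inequality comes out with the right dependence.

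The key steps, in order, are: (1) specify the income/return sequence precisely (returns $R_t = 1$ throughout, incomes chosen to create the spike-after-quiet-period structure, initial assets set small); (2) compute the optimal lookahead policy on this instance — here I would invoke the structure of the deterministic dynamic program, arguing that with $\beta=1$ and $\sqrt{\cdot}$ utility the optimum equalizes consumption over the window up to asset-nonnegativity constraints, giving the $\Theta(\sqrt{kY})$-per-window utility; (3) upper bound the utility of \emph{any} no-lookahead algorithm — the crucial observation is that before the spike the algorithm's information (all-zero income seen so far, plus whatever tail distribution) is independent of when/whether the spike lands, so it cannot have positioned large assets in anticipation, hence at the spike time its accumulated-plus-windfall wealth available to consume over the remaining few steps is at most $O(kY)$ concentrated in $O(1)$ steps, capping its utility gain; (4) take the difference and sum over repeated blocks (or over the single block) to extract the $\Omega(k\sqrt Y)$ bound.

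The main obstacle I anticipate is step (3): making the lower-bound-on-loss argument fully rigorous for an \emph{arbitrary} no-lookahead algorithm, rather than just the specific Algorithm~\ref{alg:online-cons}. The subtlety is that a no-lookahead worker still knows the \emph{distribution} of future incomes, so if the spike is part of a known distribution the worker \emph{can} hoard assets in advance — which would shrink the gap. The fix is to make the instance adversarial within the support of the distribution: use a distribution under which the spike is rare (so the optimal stochastic policy does \emph{not} hoard much), then condition on the realization where the spike does occur; on that realization the no-lookahead worker is genuinely surprised while the lookahead worker is not. Formalizing ``the no-lookahead worker's assets at the spike time are $O(\text{poly})$ small on this realization'' requires a careful argument that over-hoarding is suboptimal against the distribution, and quantifying this trade-off cleanly is where the real work lies; an alternative is to restrict the claim to deterministic no-lookahead algorithms or to phrase ``no lookahead'' as ``the $\tau=0$ instantiation of our framework,'' which sidesteps the issue at the cost of generality.
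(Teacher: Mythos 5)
There is a genuine gap, and it is in the engine of your construction rather than in the bookkeeping. Your separation rests on the lookahead worker ``pre-spreading'' a known windfall over the $k$ steps before it arrives, but the model forbids this: the constraint $0 \le c_t \le a_t$ means there is no borrowing, so knowing that income is coming does not let you consume it early. During your quiet period the lookahead worker has the same (near-zero) assets as the no-lookahead worker and can consume no more. Conversely, smoothing a windfall over the steps \emph{after} it arrives requires no lookahead at all --- once the money is in hand both workers can spread it --- and with $\beta = 1$, $R_t = 1$ deferral is free, so the no-lookahead worker is not actually forced to ``consume a huge amount all at once.'' The only time it is so forced is at the end of the horizon, where the lookahead worker is equally stuck. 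Hence the claimed per-block gap of $\Theta(\sqrt{kY}) - O(\sqrt{Y})$ does not hold, and the subsequent repairs (a spike of size $\Theta(kY)$, which violates the $[0,Y]$ hypothesis, or summing over blocks with $T = \Theta(k^2)$) are built on that unsupported step. You also correctly flag step (3) --- bounding \emph{every} no-lookahead algorithm, which may hoard against the known distribution --- as unresolved; it remains unresolved in your sketch.

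The paper's instance uses a different mechanism that sidesteps both problems: the value of lookahead is not in smoothing a spike but in \emph{calibrating how much of currently available income to consume} against an unknown future income \emph{level}. Concretely, $y_t = Y$ for $t \le k/2$ and $y_t = xY$ for $k/2 < t \le k$ with $x \sim U[0,1]$ hidden, $T = k$, $a_1 = 0$. The lookahead worker sees $x$ and consumes the sustainable rate $\frac{(1+x)}{2}Y$ at every step (feasible because the high-income phase comes first), earning $k\sqrt{(1+x)Y/2}$. For the lower bound, Yao's minimax principle reduces to deterministic no-lookahead algorithms, whose first-half consumption total $S = z_1 + \dots + z_{k/2}$ is a \emph{fixed number} independent of $x$; with probability at least $1/3$ over $x$, $S$ is off from $\frac{k}{2}\cdot\frac{1+x}{2}$ by $\Omega(k)$, and a strict-concavity estimate for $\sqrt{\cdot}$ combined with Cauchy--Schwarz converts this aggregate $\ell_1$ miscalibration into an $\Omega(k)$ utility deficit. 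This is also exactly how the paper dispatches your ``arbitrary algorithm'' worry: no hoarding analysis is needed because the adversarial randomness is in the future income level, not in the timing of a windfall. If you want to salvage your approach, you would need to relocate the uncertainty from ``when does the money arrive'' to ``how much should I be consuming right now,'' which essentially reproduces the paper's construction.
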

While our lower bound result is strong for large values of $k$, it is not very strong for small values. This is partly because we want to emphasize that a gap that grows with $k$ holds even with $R_t = \beta=1$.

\begin{proof}
As discussed above, we consider a very simple setting: $\beta = 1$, returns $R_t=1$ for all $t$. Further we will assume that the individuals start with $a_1=0$ (no initial assets). 

We consider the following input. Let $Y$ be any parameter $>0$.
\[  y_t = \begin{cases} Y ~ \text{ for $t \le k/2$} \\ x\cdot Y \text{ for $k/2 < t \le k$} \end{cases} \]
where $x$ is a value uniformly sampled from $[0,1]$. Note that both the individuals (the one with and without lookahead) know this input distribution. For simplicity, we also assume that the total time horizon $T$ equals $k$ (this assumption can be easily removed by setting $y_t = \frac{(1+x)}{2} Y$ for all $t > k$). As a final simplification, since the incomes can all be scaled, we can assume that $Y=1$ for the remainder of the proof. 

First, consider the individual with $k$ lookahead. Note that they can see the value of $x$, and thus they can consume an amount $\frac{(1+x)}{2}$ at every time step. This is feasible because the first $k/2$ steps have income $y_t = 1$, and so the assets remain above the consumption at all time steps. This yields a total utility (recalling that $T=k$) of $k \sqrt{\frac{1+x}{2}}$. 

Now, consider an individual who does not have any lookahead. Intuitively, they cannot guess the value of $x$, and thus consuming $\frac{1+x}{2}$ is not feasible. But note that the individual may use some complex (possibly randomized) algorithm that consumes non-uniformly and achieve a high total utility. We show that this is not possible.

The starting point of the proof is the classic minmax theorem of Yao~\cite{motwani-raghavan}: for a given input distribution, an optimal algorithm for inputs from drawn this distribution, is a deterministic algorithm. In other words, in order to prove our desired lower bound, it suffices to restrict ourselves to deterministic algorithms and prove a bound on the difference in total utility, in expectation over the choice of $x$. For any deterministic algorithm, since in time steps $1, \dots, (k/2)$, the algorithm only sees income of $1$, the values consumption $z_1, z_2, \dots, z_{(k/2)}$ are all fixed. Let $S = z_1 + z_2 + \dots + z_{(k/2)}$. 

First, suppose it so happens that
\begin{equation}\label{eq:lb-good-condition}
\left| ~ S - \frac{k}{2} \cdot \frac{1+x}{2}  \right| > c \cdot k,
\end{equation}
for some parameter $c$. In this case, we will argue that $\sum_i \sqrt{z_i}$ is significantly smaller than $k \sqrt{\frac{1+x}{2}}$. The starting point is the following inequality about the strict concavity of the square root function:

\begin{lemma}\label{lem:lb-helper}
Let $a \in (1/2, 1)$ be a constant, and let $w \in (0,1)$. Then we have:
\[ \sqrt{w} \le \sqrt{a} + \frac{1}{2\sqrt{a}} (w-a) - \frac{1}{8} (w-a)^2.  \]
\end{lemma}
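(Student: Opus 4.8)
The plan is to prove the inequality by a direct convexity argument on the ``gap'' function. Define
\[ g(w) := \sqrt{a} + \frac{1}{2\sqrt{a}}(w-a) - \frac{1}{8}(w-a)^2 - \sqrt{w}, \qquad w \in (0,1), \]
so that the claimed inequality is exactly $g(w) \ge 0$ on $(0,1)$. First I would record the two ``anchor'' facts at $w=a$: plainly $g(a)=0$, and differentiating once gives $g'(w) = \frac{1}{2\sqrt{a}} - \frac{1}{4}(w-a) - \frac{1}{2\sqrt{w}}$, so that $g'(a)=0$ as well. Thus $w=a$ is a stationary point of $g$, and it lies in the open interval $(0,1)$ because $a \in (1/2,1)$.

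The key step is the second derivative: $g''(w) = \frac{1}{4}\big(w^{-3/2}-1\big)$, which is nonnegative for every $w \in (0,1]$ (indeed strictly positive for $w<1$). Hence $g$ is convex on $(0,1)$, and a convex function with a stationary point at an interior point $a$ attains its minimum over that interval at $a$. Since $g(a)=0$, we conclude $g(w)\ge 0$ for all $w\in(0,1)$, which is the lemma. Note that this argument only needs $a \in (0,1)$; the stronger hypothesis $a\in(1/2,1)$ in the statement is simply the range that arises in the application (where $a$ plays the role of the target average consumption $\tfrac{1+x}{2}$), so there is no need to exploit it here.

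The main thing to be careful about --- more bookkeeping than a genuine obstacle --- is the domain: convexity of $g$ is exactly what fails once $w>1$, since there $g''<0$, so it is essential that we only claim the bound for $w\in(0,1)$, and one should check that every later invocation of the lemma feeds in arguments in this range. An alternative, if one prefers to avoid second derivatives, is the integral form: writing $\sqrt{w}-\sqrt{a}=\int_a^w \frac{dt}{2\sqrt{t}}$ and using the pointwise estimate $\big|\tfrac{1}{2\sqrt{a}}-\tfrac{1}{2\sqrt{t}}\big| \ge \tfrac{1}{4}\,|t-a|$ for $t\in(0,1)$ (which follows from $\sqrt{a}+\sqrt{t}\le 2$ and $\sqrt{at}\le 1$), then integrating over $t$ between $a$ and $w$ yields the same bound with the correct sign in both cases $w>a$ and $w<a$. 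I would keep the convexity proof as the primary one, since it is the shorter of the two.
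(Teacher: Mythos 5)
Your proof is correct, but it takes a different route from the paper's. The paper proves the lemma by pure algebra: it writes the error of the first-order approximation exactly as
\[
\sqrt{w} - \sqrt{a} - \tfrac{1}{2\sqrt{a}}(w-a) \;=\; -\,\frac{(w-a)^2}{2\sqrt{a}\,(\sqrt{w}+\sqrt{a})^2},
\]
and then observes that the denominator is at most $8$ when $a,w\in(0,1)$, which gives the $-\tfrac18 (w-a)^2$ term immediately. Your primary argument instead defines the gap function $g$, checks $g(a)=g'(a)=0$, and shows $g''(w)=\tfrac14(w^{-3/2}-1)\ge 0$ on $(0,1)$, so that $g$ is convex with an interior stationary point at $a$ and hence nonnegative; the computations are all correct. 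Your alternative integral-form argument is essentially the paper's identity in disguise (the pointwise bound $\lvert\tfrac{1}{2\sqrt{a}}-\tfrac{1}{2\sqrt{t}}\rvert \ge \tfrac14\lvert t-a\rvert$ is the same denominator estimate). What each buys: the paper's computation is calculus-free and exhibits the exact quadratic error term before rounding the constant to $\tfrac18$, while your convexity argument is arguably more systematic and generalizes mechanically to other concave utilities by bounding the second derivative. Your two side observations are also accurate and match the paper's actual usage: the hypothesis $a\in(1/2,1)$ is never needed (the paper, too, only uses $a,w\in(0,1)$), and the restriction $w\le 1$ is genuinely essential --- the strengthened concavity bound fails for large $w$ --- so flagging that the later invocations must respect this range is a legitimate caution rather than pedantry.
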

The proof follows by a simple calculation. 

\begin{proof}
We have:
\begin{align*}
\sqrt{w} - \sqrt{a} - \frac{1}{2\sqrt{a}} (w-a) &= (w-a) \left( \frac{1}{\sqrt{w}+\sqrt{a}} - \frac{1}{2\sqrt{a}} \right) \\
& = \frac{ (w-a)(a-w) }{2\sqrt{a}(\sqrt{w}+\sqrt{a})^2} \\
& = - \frac{ (w-a)^2 }{2\sqrt{a}(\sqrt{w}+\sqrt{a})^2}
\end{align*}
The denominator is $\le 8$, and thus by rearranging, the inequality follows.
\end{proof}

Now, let us write $a = \frac{1+x}{2}$. By assumption, we have that $| z_1 + z_2 + \dots + z_{k/2} - \frac{k}{2} a| > ck$, and thus we have 
\begin{equation}\label{eq:lb-variation} \sum_{i \le k} |z_i - a| > ck.
\end{equation}
Next, we can use Lemma~\ref{lem:lb-helper} to conclude that
\[  \sum_{i \le k} \sqrt{z_i} \le k \sqrt{a} + \frac{1}{2\sqrt{a}} \sum_{i \le k} (z_i -a) - \frac{1}{8} (z_i - a)^2. \]
Now, since the consumption cannot be larger than the overall income (which is $ka$), the middle term on the RHS is $\le 0$. Thus, we have
\[ \sum_{i \le k} \sqrt{z_i} \le k \sqrt{a}  - \frac{1}{8} (z_i - a)^2. \]
Next, by the Cauchy-Schwartz inequality and~\eqref{eq:lb-variation},
 \[ \sum_i (z_i - a)^2 \ge \frac{1}{k} \left( \sum_{i} |z_i -a| \right)^2 > c^2 k. \]
Together, the above inequalities imply that $\sum_i \sqrt{z_i} \le k\sqrt{a} - \frac{c^2 k}{8}$. 
This shows that if the values $z_i$ chosen by the deterministic algorithm satisfy \eqref{eq:lb-good-condition}, then the algorithm with no lookahead has total utility $\Omega(k)$ worse than an algorithm with lookahead. 

The final step is to prove that if $x$ is chosen at random from $(0,1)$, the condition~\eqref{eq:lb-good-condition} holds with a constant probability for some $c>0$. Since $S$ is fixed, the condition is equivalent to $| \frac{2S}{k} - \frac{1+x}{2}| > 2c$. Equivalently, $|\frac{4S}{k} - 1 -x| > 4c$. For any fixed $\alpha$, if $x \sim_{\text{uar}} (0,1)$ the probability that $|\alpha - x| \le 1/3$ is clearly $\le 2/3$. Thus, the condition above must hold with $c = 1/12$, with probability at least $1/3$. 

Putting everything together, we have that with probability $1/3$, the no-lookahead algorithm is $\Omega(k)$ worse than the algorithm with lookahead, and it can never be better. Thus the \emph{expected} difference between the total utilities is also $\Omega(k)$. Yao's minmax theorem implies that this lower bound also holds for any (possibly randomized) algorithm. 
\end{proof}

Theorem~\ref{thm:main-lookahead} shows that an algorithm with lookahead has a provable advantage over an algorithm that knows only the distribution of the incomes, even in the simplest setting where the decay factor $\beta =1$ and the returns are all $1$. Furthermore, the advantage grows \emph{linearly} with the amount of lookahead. In particular, if an individual has infinite lookahead, they can have an advantage of $\Omega(T)$.

Next, we show that when $\beta=1$ and $R_t =1$ for all $t$, the lower bound from Theorem~\ref{thm:main-lookahead}  cannot be improved.

\begin{lemma}\label{lem:k-is-tight}
Suppose $\beta=1$ and the return $R_t=1$ for all $t$. Let $y_1, y_2, \dots, y_T$ be a sequence of income values with $y_t \ge 0$ for all $t$. Then if the consumption sequence of a $k$-lookahead algorithm is $\{c_1, c_2, \dots, c_T\}$, there exists a no-lookahead algorithm whose total utility is $\sqrt{c_1} + \sqrt{c_2} + \dots + \sqrt{c_{T-k}}$.
\end{lemma}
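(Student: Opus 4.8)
The plan is to exhibit an explicit no-lookahead strategy that simply \emph{lags} the $k$-lookahead algorithm by exactly $k$ steps. Let $\{c_1,\dots,c_T\}$ be the consumption sequence of the $k$-lookahead algorithm and let $x_t$ be its asset trajectory, so $x_1 = a_1$, $x_{t+1} = (x_t - c_t) + y_t$ (using $R_t = 1$), and $0 \le c_t \le x_t$. I would define the no-lookahead algorithm to consume $z_t = 0$ for $t = 1,\dots,k$ and $z_t = c_{t-k}$ for $t = k+1,\dots,T$. Its total utility is then $\sum_{t=k+1}^T \sqrt{z_t} = \sum_{s=1}^{T-k}\sqrt{c_s}$, which is exactly the claimed value, so the whole lemma reduces to checking that this strategy is (i) implementable with no lookahead and (ii) feasible.

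For (i), the point is that the information a no-lookahead algorithm has at time $t$ — the realized incomes $y_1,\dots,y_t$ plus the distribution of future incomes — is precisely the information the $k$-lookahead algorithm had at time $t-k$, namely its past $y_1,\dots,y_{t-k}$ together with its lookahead window $y_{t-k+1},\dots,y_{t}$. Hence at time $t$ the no-lookahead algorithm can internally re-simulate the $k$-lookahead algorithm from the start up to its decision at step $t-k$, recover $c_{t-k}$, and replay it, without ever consulting $y_{t+1},\dots$.

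For (ii), let $w_t$ be the asset trajectory of the lagged algorithm and set $\Delta_s := w_{k+s} - x_s$. Since nothing is consumed in the first $k$ steps, $w_{k+1} = a_1 + \sum_{j=1}^{k} y_j$, so $\Delta_1 = \sum_{j=1}^k y_j \ge 0$. For $s \ge 1$, plugging $w_{k+s+1} = w_{k+s} - c_s + y_{k+s}$ and $x_{s+1} = x_s - c_s + y_s$ into the definition makes the consumption terms cancel and yields the clean recursion $\Delta_{s+1} = \Delta_s + y_{k+s} - y_s$. Telescoping gives $\Delta_s = \sum_{j=1}^k y_j - \sum_{j=1}^{s-1} y_j + \sum_{j=k+1}^{k+s-1} y_j$, and a short case split on whether $s - 1 \le k$ shows this is, in each case, a sum of nonnegative income terms; hence $\Delta_s \ge 0$, i.e. $w_{k+s} \ge x_s$. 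Since the $k$-lookahead algorithm is feasible, $z_{k+s} = c_s \le x_s \le w_{k+s}$, so the lagged strategy never overspends, and $z_t \ge 0$ is trivial.

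I expect step (ii) to be the only real content: it is exactly where $R_t = 1$ and $y_t \ge 0$ enter. With nonunit returns, the carried-over income would be rescaled inconsistently between the two trajectories and the telescoping identity for $\Delta_s$ would break, which is why the lemma is stated for $R_t = 1$. One can also remark that this bound is not the best a no-lookahead algorithm could achieve — it wastes the first $k$ steps — but it suffices to recover all but the last $k$ terms of the $k$-lookahead utility, which is precisely what is needed to show that the $\Omega(k\sqrt{Y})$ gap of Theorem~\ref{thm:main-lookahead} cannot be asymptotically improved in this regime.
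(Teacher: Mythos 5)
Your proposal is correct and takes essentially the same route as the paper: the paper's proof is exactly the $k$-delay strategy (consume $0$ for the first $k$ steps, then replay $c_{t-k}$), with feasibility asserted as immediate from $y_t \ge 0$ and $\beta = 1$. The only difference is that you spell out the feasibility check the paper leaves implicit, via the telescoping identity for $\Delta_s = w_{k+s} - x_s$ (which in fact collapses to $\Delta_s = \sum_{j=s}^{k+s-1} y_j \ge 0$ with no case split needed), and you correctly note the implementability point that the delayed algorithm needs no future information.
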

In other words, the difference in the total utility (between the $k$-lookahead and the no-lookahead algorithms) is simply $\sqrt{c_{T-k+1}} + \dots + \sqrt{c_T}$. In settings where all the $c_t$ are of magnitude $O(\text{income})$, this corresponds to $O(k)$ times the square root of the income, which is the lower bound in Theorem~\ref{thm:main-lookahead}.

\begin{proof}
The proof is simple: a no-lookahead algorithm can mimic a $k$-lookahead algorithm, but with a delay of $k$ steps. We will call this the $k$-delay algorithm. Let $c_1,c_2,\dots,c_T$ be the consumption squence of the given $k$-lookahead algorithm. The $k$-delay algorithm is defined as follows,
\begin{itemize}
\item[] For $t = 1$ to $T$:
\begin{enumerate}
\item If $t \le k$, consume $0$
\item Else consume $c_{t-k}$
%\item Else if $t=T$, consume $\hat{x}_T = \sum_{t=1}^T y_t-\sum_{t=1}^{T-k-1} c_t$
\end{enumerate}
\end{itemize}

The total utility bound is easy to see. One only needs to check that the algorithm is feasible (i.e., it satisfies the condition that the total consumption until time $t$ is bounded by the total income plus the assets until that time). This is easy to check because the algorithm consumes 0 for the first $k$ steps, while the income $y_t \ge 0$. Since the decay factor $\beta = 1$, delay does not change the utility the algorithm receives.
\end{proof}

\paragraph{Remark.}  We see that the proof relies on having $\beta=1$. If we take into account factors such as inflation (e.g., $\beta = 0.95$), then the ``power of lookahead'' can likely be made much more significant.

\section{Experiments}
In this section, we explore three different sets of experiments regarding the effects of lookahead, the parameters involved in lookahead under uncertain job timetables, and mitigation schemes.

\subsection{Experiment Setup}
\label{sec:expt-setup}

We will first introduce the different elements in our experiment setup.\footnote{Our experimental codebase is available at \url{https://github.com/kanchanarp/COUNTING-HOURS-COUNTING-LOSSES}}

% \footnote{The anonymous code can be accessed at \url{https://osf.io/ujfr8/?view_only=3ba140eb821043bcb806be4c416e8a74}}
% \todo{@kanchana please add more details to this part e.g. on how the shocks are disbursed, etc. whenever you get a chance.}

\paragraph{Workers.} Within our framework, agents represent employed individuals who earn weekly income, own assets, and decide whether to consume or save. We create an income distribution of 10,000 individuals using 2019 income data from the US Census Bureau's Annual ASEC survey of the Consumer Price Index (by the IPUMS Consumer Price Survey) \cite{flood2020integrated, dqydj}. 
In the first stage, we eliminate outliers from the income distribution to address challenges related to individuals with exceptionally high or low earnings, which can be challenging to compare due to significant scale differences. We identify outliers by employing the commonly-used inter-quartile range (IQR) proximity rule \cite{dekking2005modern}.
%,  based on its criteria that an income less than $1.5 \times IQR$ of the first quartile or more than $1.5 \times IQR$ of the third quartile qualifies as an outlier. 
Subsequently, we categorize individuals into four distinct income groups: low incomes ranging from \$1.22 to \$1125.49, low-middle incomes from \$1125.49 to \$2249.75, high-middle incomes from \$2249.75 to \$3374.02, and high incomes from \$3374.02 to \$4498.29. This classification is achieved by partitioning the overall income range into these four segments. To establish asset values, individuals are assigned the median population asset value of \$123,840, derived from median percentile net worth data and median net worth by income percentile data from the Federal Reserve \cite{fedres}.

\paragraph{Minimum Subsistence.} Furthermore, our simulation considers minimum subsistence, i.e., the constraint that the individuals must allocate funds for their minimum basic needs, such as food and shelter \cite{zhang2020fairness, ZIMMERMAN2003233,ALVAREZPELAEZ2005633,Shin2011,shim2014portfolio,ANTONY2019124, dwork2018fairness, miranda2023saving, miranda2020model}. This consideration acknowledges the fundamental requirement for individuals to fulfill their basic necessities as part of the decision-making process regarding consumption and utility maximization. The minimum subsistence values are derived from mean annual expenditures in 2019 from the Consumer Expenditure Surveys of the US Bureau Of Labor Statistics~\cite{minsub}, stratified based on income levels. In essence, individuals with similar income levels are obligated to cover equivalent amounts for their basic needs.

 \paragraph{Shocks.} Shocks, defined as alterations to an agent's financial state due to schedule instability, play a pivotal role in influencing decisions related to consumption and savings. These shocks can either positively or negatively impact income, such as sudden work-hour reductions or increases. The magnitude of income shocks ranges from $-0.4$ to $0.4$, with shocks occurring as $(1+r) \times \text{income}$, where $r$ represents the shock value. The shocks are generated from a Bernoulli process, with the shock size parameter $r$ uniformly sampled from $[-0.4,0.4]$.

\paragraph{Isolating the Impact of Lookahead and Other Parameters.} At each time point (here, a workweek), each individual within an income group shares identical observable features, such as income, shocks stemming from unpredictable schedules, and returns on their assets. The sole point of divergence among individuals within the same group is the extent of lookahead they possess. This deliberate design choice aims to isolate the temporal impact of lookahead on utility, eliminating other financial factors' interference. The overall job timeline spans 26 weeks, equivalent to a 6-month job duration. The return range on saved assets varies between $0.9-1.1$ with an added variance of $\pm 0.05$. The discounting factor $\beta$ is set to the commonly-used value of $0.95$.

\subsection{Future Lookahead: An Empirical Inquiry}
\label{sec:expts-look}

This section aims to examine the utility acquired by individuals under varying levels of lookahead. Specifically, the experiment seeks to compare those with minimal (or no) information about the future, such as protected groups like hourly, part-time workers, and specific racial groups that typically receive limited or no advance notice, against other workers with more foresight. The experiment setup is as explained in \S \ref{sec:expt-setup} and the outcomes of this experiment are depicted in Figure \ref{fig:lookahead_final}.

\begin{figure}[!htbp]
    \centering
  \includegraphics[width=0.45\columnwidth]{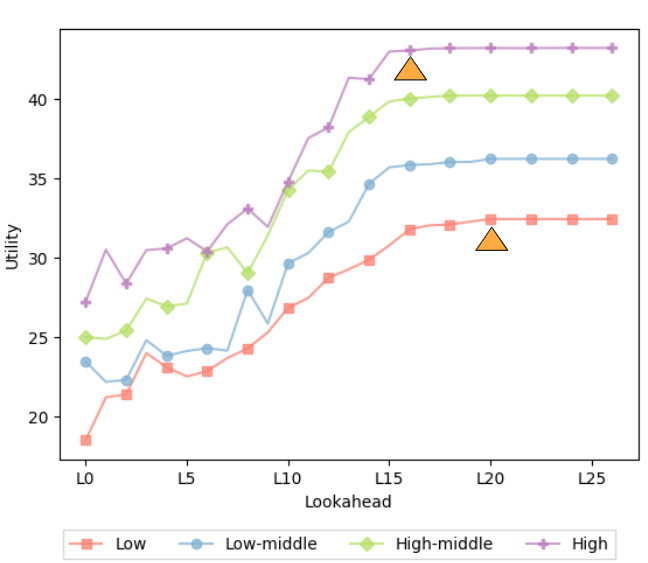}
        \caption{The final utility gained for different levels of lookahead is illustrated for four distinct income classes, each comprising 27 individuals. Workers are of similar features, with variations solely based on the temporal aspect, i.e., the amount of lookahead in their work schedules. The $x$-axis depicts the lookahead value and the $y$-axis represents the total utility at the end of $T$ steps. The two orange triangles are in place to highlight the difference in the level of lookahead needed for approaching the near-maximum utility (near-maximum utility is a utility close to that of a worker with full information at L26) as the income levels change.}
        \label{fig:lookahead_final}
\end{figure}

\paragraph{Analysis.} The insights from Figure \ref{fig:lookahead_final} can be summarized in four key points. 
\begin{itemize}
\item Firstly, workers with lower lookahead and minimal information about future instability experience significantly lower financial utility compared to those with higher lookahead. Unsurprisingly, individuals struggle to manage their finances effectively when confronted with unforeseen schedules. 
\item Secondly, lookahead has a positive impact overall. Workers with more lookahead can efficiently manage their finances, resulting in higher utility. 
\item Thirdly, workers do not require full information about their work schedules in advance. Even beyond the midpoint lookahead (lookahead 13), workers can achieve a utility comparable to those with complete information about their schedules. 
\item Lastly, higher income leads to greater utility, as individuals can consume without concerns. This is evident from the consistent shift of the plots along the $y$-axis. 
\end{itemize}

However, irrespective of income level, having more than the midpoint level of lookahead proves advantageous for individuals. Providing people with advance notice of their schedules can be reasonably implemented for the next 2-3 months of work without requiring employers to furnish complete information at the start of their tenure. Notably, for higher-income workers, less future information is needed to approach the utility values of someone with complete information, as observed by the leftward movement of the orange triangles across all lookahead levels.

\subsection{Dynamics of Asset Appreciation and Depreciation}
\label{sec:expt-return}

Our simulated space provides a comprehensive platform for delving into the intricacies of parameters within work scheduling. This exploration includes understanding the behavior of individuals with diminishing assets compared to those experiencing favorable returns on their savings. %which allows for the examination of diverse scenarios under conditions of work instability.

Therefore, in this section, our objective is to investigate the impact of asset appreciation and depreciation, i.e., positive and negative return rates on workers' assets on decision-making across various levels of lookahead. This also serves as an examination of scenarios wherein the workers are already at a (dis)advantage in terms of assets.

Assets depreciate when their value declines over time, influenced by factors like risky investments, fluctuating market conditions, tax obligations, possessions becoming obsolete, and wear and tear, as seen in vehicles, buildings, and cash saved without earning interest. Conversely, asset appreciation occurs when individuals receive returns on their savings or make profitable investments in stocks, among other factors.

All experimental settings are similar to that of \S \ref{sec:expt-setup} other than the return rates on assets. Here, we compare negative return rates in the range [0.75,0.95] to positive return rates in [1.05,1.25].

\begin{figure*}[!htbp]
    \centering
    \begin{subfigure}[b]{0.49\textwidth}
        \centering
        \includegraphics[width=0.75\columnwidth]{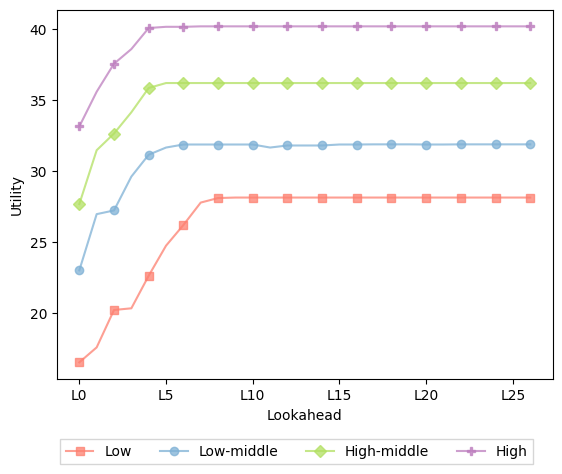}
        \caption{Negative return rate analysis}
        \label{fig:neg-return}
    \end{subfigure}%
    ~ 
    \begin{subfigure}[b]{0.51\textwidth}
        \centering
        \includegraphics[width=0.75\columnwidth]{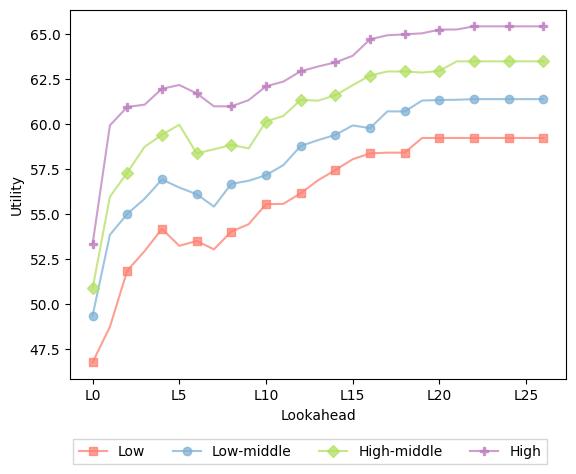}
        \caption{Positive return rate analysis}
        \label{fig:pos-return}
    \end{subfigure}
    
    \caption{Individuals with similar features but varying levels of lookahead under negative return rates ranging from 0.75 to 0.95 on their assets, as well as positive return rates between 1.05 and 1.25 on their assets.} 
    \label{fig:expt-returns}
    % \vspace{-2.0mm}
\end{figure*}

\paragraph{Analysis.} The findings are illustrated in Figure \ref{fig:expt-returns}. Several insights emerge from this analysis.
\begin{itemize}
\item Firstly, workers generally experience higher utility when they encounter favorable return rates. This is evident by comparing the utility range in the negative rates plot, which spans from 15 to 40, to the positive plot, which ranges from 45 to 65. 
\item Secondly, with depreciating assets, individuals benefit significantly from small amounts of lookahead, reaching near-maximum utility. People across income classes achieve a near-maximum utility before Lookahead 10. There is a consistent decline in the lookahead value required to attain near-maximum utility, reaching around lookahead 5 for the highest income class and around 10 for the lowest income class. This observation aligns with the trend observed in the previous section, where more income classes require less future information to reach peak utility values. 
\item Thirdly, in the case of positive returns, individuals have more flexibility in consumption, as they anticipate overall favorable returns ahead. Future information is not as crucial as in the negative returns scenario, where they are not at a disadvantageous situation with depreciating assets. This explains the late convergence of all income classes to the peak utility value (under positive return rates).
\end{itemize}

\subsection{Interventions} 
\label{sec:expts-interventions}

% \todo[color=yellow]{what, how, and when we can mitigate. + Figures of results}
In this section, the goal is to examine the mitigating effects of two intervention scenarios. In terms of intervention policies, simulation provides a valuable sandbox environment for testing that might be challenging or even impossible to explore in the real world. In a simulated work scheduling setting, researchers, policymakers, and practitioners can experiment with various interventions, assess their effects, and fine-tune strategies without real-world consequences. 

\begin{figure}[!htbp]
    \centering
    \begin{subfigure}{.49\textwidth}
    \centering
     \includegraphics[width=0.9\columnwidth]{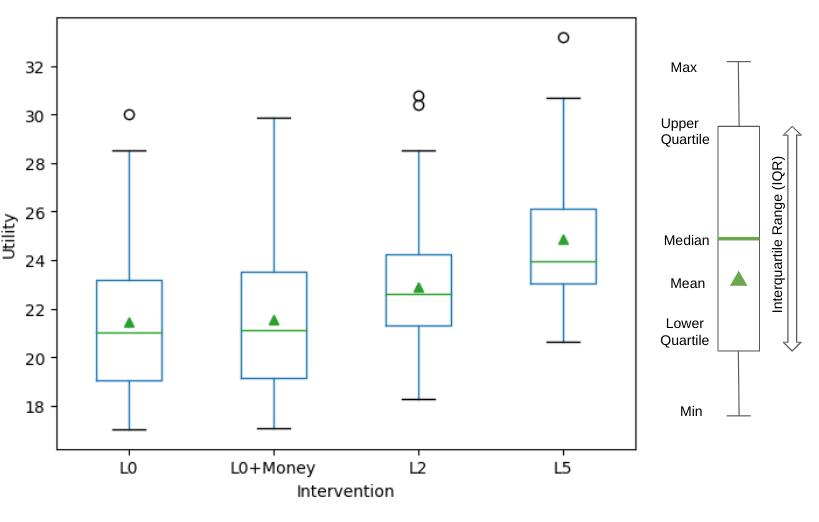}
     % \caption{aaaaaa}
    \end{subfigure}%
    \begin{subfigure}{.49\textwidth}
     \centering
     \includegraphics[width=0.75\columnwidth]{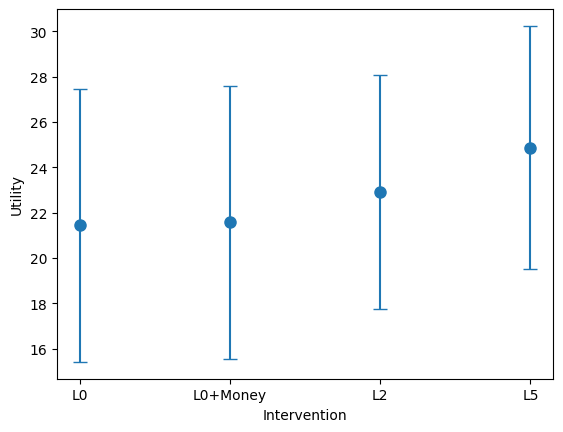}
     % \caption{bbbb}
    \end{subfigure}
        \caption{The box plot and the error bar plot of the statistical distribution of additional gained utilities for various interventions. The interventions considered are L0 + Money, which involves compensation fees for individuals with no future information; L2, assigning at least two weeks of lookahead to all; and L5, assigning at least five weeks of lookahead to everyone. In the box plot, the green triangle represents the mean, and the green line represents the median. In the error bar plot, the blue circle represents the mean and the error bars are 2 standard deviations from the mean (95\% confidence interval).}
        \label{fig:interventions}
\end{figure}

 All experimental settings are similar to \S \ref{sec:expt-setup}. We assign interventions (to a sample of 50 individuals per intervention), as follows: 
\begin{itemize}

    \item \textbf{Compensation:} Workers will be compensated for sudden schedule changes and \emph{on-call} shifts. This policy is modeled after the measures approved by the San Francisco Board of Supervisors, which introduced new protections for retail workers in the city, necessitating employers to provide compensation for unpredictable schedules based on factors such as employment type, hourly rate, hours of work \cite{golden2015irregular}. Inspired by this policy, we disburse twice the amount of earnings back when there is a negative shock.

    \item \textbf{Mandatory minimum advance notice:} Every worker should be entitled to a mandated minimum lookahead, meaning they should be aware of their schedule for the upcoming two weeks. This proposition draws inspiration from the Schedules that Work Act of 2014 (H.R. 5159) presented in Congress, which stipulates that if there are alterations to the schedule and minimum hours, the employer must inform the employee at least two weeks prior to the start point of the new schedule \cite{golden2015irregular}.  Inspired by this policy, we assign workers two weeks and roughly one month (5 weeks) of minimum lookahead, separately.

% \begin{table}[!b]
% \caption{Mean intervention utility for 50 pairs of agents in each intervention scenario.}
% \label{tab:interventions}
% \begin{tabular}{llll}
% \toprule
% Scenario & \multicolumn{3}{c}{Compensation} & \multicolumn{3}{|c}{Minimum Lookahead}  \\
%                 & \multicolumn{1}{c}{Comp $\$X$}  & \multicolumn{1}{c}{Min 1 week} & \multicolumn{1}{c|}{Min 5 weeks} \midrule
% Compensation & \textbf{36.1}               & 25.3                     & 12.9                        & \textbf{53}        & {45}      & 25                \\ 

% Minimum Lookahead & \textbf{29.4}               & 21.8                      & 11.1                        & \textbf{12}        & {11}      & 6               \\ \hline

% \end{tabular}
% \end{table}

% Employers would begin to have to compensate workers when they are sent home from work earlier than planned, paid at their regular rate for four hours or the total length of the workers’ shift if the shift is less than four hours. 

% In addition, they would need to compensate workers at their usual rate for schedule changes, on-call shifts, and split shifts, when the employer changes the schedule less than 24 hours prior to a scheduled shift; when the worker is scheduled for an “on-call” or a “call-in” shift, but is not called in; and when the worker is scheduled for a shift interrupted by a non-working period (that is not a meal break) (Alexander and Haley-Lock 2013).

\end{itemize}

\paragraph{Analysis.} The results for the intervention scenarios are depicted in Figure \ref{fig:interventions} as a box plot to show the distributions (and a separate corresponding error bar plot to capture uncertainty). Three insights can be derived.

\begin{itemize}
\item Firstly, interventions have a positive impact overall, evident from the increased utility across all statistical metrics (mean, median, confidence interval, and all quartiles) post-intervention. 
\item Secondly, while compensation in the form of additional income for unanticipated schedule changes is beneficial, it does not substitute for providing workers with advance notice of their shift schedules. Having knowledge of future plans with the same income but a predictable schedule appears to be more effective for workers' financial well-being than an unforeseen, volatile schedule compensated with fees. A comparison between \emph{L0 + Money} and \emph{L2} indicates that even incorporating two weeks of lookahead is more advantageous than providing compensation for instability without any lookahead. 
\item Thirdly, offering just one month of advance notice results in a notable increase in utility, as seen in the comparison between \emph{L5} and \emph{L0}. Even small amounts of advance notice can significantly enhance utility.
\end{itemize}

\paragraph{Note on the Robustness of Experimental Parameters/Results.} It is important to mention that we conducted these experiments with various random seeds, considering different runs (e.g., with a median asset of approximately \$70,000 \cite{unionwealth}, representing the median working-class wealth, as opposed to the population median), and with other realistic discount and return parameters, as well as varied plausible shock sizes. The overall trends in our results in the previous sections remain consistent as long as the chosen parameters fall within more realistic ranges. If one opts for more extreme and unrealistic parameters, the distinctions observed in \S \ref{sec:expts-look}, \ref{sec:expt-return}, and \ref{sec:expts-interventions} will become even more pronounced.

\section{Discussions and Limitations}
\label{sec:future-work}

The primary contribution of our paper lies in the development of an online framework that delves into the financial insecurity stemming from work schedule instability. This framework addresses two crucial questions pertinent to work scheduling:

\begin{itemize}
    \item How can we formally and empirically assess the negative effects of work schedule instability on income and employment?
    \item How can this in turn help with informing the development of effective mitigation policies and regulations in the workplace?
\end{itemize}

 We provide analytical insights into how lookahead significantly enhances individuals' utility, with a focus on the effectiveness of increased lookahead. Our empirical investigation employs simulations and explores two distinct intervention strategies aimed at mitigating the adverse effects of schedule instability. While our model is being deployed in a work setting, it applies to any scenario that involves temporal uncertainty with the possibility of improvement with lookahead.

% By answering these questions, we aim to encourage researchers across disciplines such as sociology, economics, computer science, and public policy to delve into the nuances of temporal instability and its implications for individuals and society at large.

\vspace{0.5cm}

Our paper, however, has a number of limitations.

\paragraph{Simulation.} A simulation is only as good as the models used to build it and relies on idealized and formal models of human behavior that are simplifications. A simplified model however is useful for an inquiry into \emph{limits}: in our work, we show that even under ideal models of human utility-maximizing behavior, the lack of predictability and lookahead has concrete consequences for financial stability.

\paragraph{Homogeneity.} Our simulation assumes a level of societal uniformity where all individuals in the same income group face similar external forces and constraints, with temporal differences being the primary distinguishing factor. However, this approach overlooks other forms of societal inequities. Alternatively, if there exists diversity among individuals in managing their finances, the system could incorporate this heterogeneity into its decision-making processes.

Additionally, the impact on one's income (here, due to temporal instability) can vary significantly even for the same job due to various factors such as gender \cite{blau2017gender, reader2022models}, ability \cite{whittaker2019disability}, race \cite{zwerling1992race}, and health status \cite{hicken2014racial, gupta2019equalizing}. Furthermore, the likelihood of benefiting from public policies aimed at addressing these disparities may also vary among different demographic groups.

\paragraph{On Algorithmic Scheduling.}  In the digital age, algorithmic processes and digital technologies are becoming integral components across various workforce sectors. This ranges from business process management, where automation streamlines routine tasks, to the utilization of automated tools for conducting work shift scheduling.

The integration of algorithms and digital tools results in efficiency and productivity across various aspects of the contemporary workforce. For example, the benefits of utilizing labor-scheduling tools instead of basic spreadsheets or manual methods are obvious: it serves as a time and effort-saving solution, offering enhanced control over the scheduling process to both managers and employees. Despite the positive impact of this technology, there are instances where it may introduce new challenges \cite{cons_res}.

Scheduling analytics have the potential to enable operators to exploit their workforce unfairly. Reports from The New York Times point to scheduling software as a factor behind more unpredictable scheduling practices, particularly for low-wage workers in the service industry \cite{cons_res}.  From the standpoint of employees, this unpredictability in scheduling is perceived as a negative attribute. It is an instability enforced by employers, often utilizing workforce management technology and algorithms. Therefore, it would be beneficial to enhance our modeling approach by incorporating more provisions to account for the intricate details and mechanisms of algorithmic scheduling parameters and operations.

\paragraph{Future Directions.} To improve this framework, future enhancements could involve the integration of additional factors influencing financial dynamics, which are correlated with temporal uncertainty. This may include incorporating constraints on retirement savings, distinguishing between risky and riskless assets, and accounting for broader socioeconomic changes, such as alterations in workplace organizational structures or shifts in job locations.

Moreover, our framework presents an avenue for examining fairness and various forms of inequity in automated shift scheduling, particularly concerning sub-populations with different sensitive attributes. It also opens up possibilities for investigating targeted interventions within specific work settings or environments (e.g., specific interventions designed for the food and retail sectors), aiming to alleviate the impacts of biased decision-making on individuals over extended time horizons.

\cleardoublepage
\bibliographystyle{unsrt}  
\bibliography{references}

\end{document}